\newtheorem{theorem}{Theorem}
\newtheorem{lemma}{Lemma}
\theoremstyle{definition}
\newcommand{\argmin}{\mathop{\mathrm{argmin}}}
\providecommand{\keywords}[1]
{
  \textbf{\textit{Key words: }} #1
}
\providecommand{\subjclass}[1]
{
  \textbf{\textit{MSC2020: }} #1
}
\begin{document}

\title{Random feature-based double Vovk-Azoury-Warmuth algorithm for online multi-kernel learning}

\author{
  Dmitry B. Rokhlin\thanks{The research of D.B.\,Rokhlin was supported by the Regional Mathematical Center of the Southern Federal University with the Agreement no. 075-02-2025-1720 of the Ministry of Science and Higher Education of Russia.} \\
  Institute of Mathematics, Mechanics and Computer Sciences of the\\
  Southern Federal University \\
  Regional Scientific and Educational Mathematical Center of the\\
  Southern Federal University \\
  \texttt{dbrohlin@sfedu.ru}
  \and
  Olga V. Gurtovaya \\
  Institute of Mathematics, Mechanics and Computer Sciences of the\\
  Southern Federal University \\
   \texttt{imedashvili@sfedu.ru}
}

 \date{} 

\maketitle

\abstract{We introduce a novel multi-kernel learning algorithm, VAW$^2$, for online least squares regression in reproducing kernel Hilbert spaces (RKHS). VAW$^2$ leverages random Fourier feature-based functional approximation and the Vovk-Azoury-Warmuth (VAW) method in a two-level procedure: VAW is used to construct expert strategies from random features generated for each kernel at the first level, and then again to combine their predictions at the second level. A theoretical analysis yields a regret bound of $O(T^{1/2}\ln T)$ in expectation with respect to artificial randomness, when the number of random features scales as $T^{1/2}$. Empirical results on some benchmark datasets demonstrate that VAW$^2$ achieves superior performance compared to the existing online multi-kernel learning algorithms: Raker and OMKL-GF, and to other theoretically grounded method methods involving convex combination of expert predictions at the second level.}

\keywords{Vovk-Azoury-Warmuth algorithm, online multi-kernel learning, RKHS, random Fourier features, regret bounds}

\subjclass{68Q32, 68W27, 68W20}   

\section{Introduction}\label{sec:1}
Kernel methods \cite{Scholkopf2002,Hofmann2008} allow to extend the scope of the linear models to the analysis of complex nonlinear dependencies by working in reproducing kernel Hilbert spaces (RKHS). They combine high expressive power, formalized as the universality property (see, e.g. \cite{Sriperumbudur2011}), with the possibility of using tools from the convex analysis to establish global optimality results. However, the computational complexity of these methods grows as $T^3$, where $T$ is the number of examples in a classical batch supervised learning problem. 

The gradient descent algorithm, being applied in an RKHS in the online mode \cite{Kivinen2004online}, at each iteration increases the complexity of the linear combination of kernels by adding a new ``support vector'' (SV) to a dictionary. There is a lot of techniques for dealing with this phenomenon, called the curse of kernelization \cite{Wang2012}. These techniques can be broadly categorized into budget maintenance strategies and functional approximation strategies \cite{Hoi2021online}. The budget maintenance strategies include SV removal, SV projection and SV merging families of algorithms. Similar kernel adaptive filtering algorithms were developed for signal processing \cite{Slavakis2014,VanVaerenbergh2014online}. 

In this paper, we follow the functional approximation strategy \cite{Lu2016large} based on random Fourier features (RFF) \cite{Rahimi2007}. This approach allows to work in a fixed dimensional space in each iteration. However, to get sublinear regret with respect to a ball in an RKHS, the dimension of this space, which is equal to the number of random features, should grow with $T$. 

Besides the computational complexity, another issue with kernel methods concerns the kernel selection, which essentially influences the results. Multi-kernel methods try to address this issue by choosing a kernel combination from a large preselected dictionary \cite{Gonen2011multiple}. In the online learning setting multi-kernel methods in conjunction with the RFF-based functional approximation were used in \cite{Sahoo2019large,Shen2019random}. These papers apply the online gradient descent method to random feature vectors related to each kernel to generated ``expert'' strategies, and then combine their predictions by an exponential weight update rule (used in both papers \cite{Sahoo2019large,Shen2019random}), or by the online gradient descent algorithm (used in \cite{Sahoo2019large}).

In this paper we are interested in the online least squares regression problem in the RKHS spaces. In the finite dimensional case the Vovk-Azoury-Warmuth (VAW) algorithm \cite{Vovk2001,Azoury2001} provides an optimal regret bound $O(\ln T)$ (see also \cite{Gaillard2019uniform}). In the general case the regret w.r.t. a predictor in a ball of an RKHS can be bounded by $O(T^{1/2})$ \cite{Vovk2006}, and this bound is not improvable. 
We consider the problem in the multi-kernel setting and prove a loss bound $O(T^{1/2}\ln T)$ in expectation w.r.t. an artificial randomness. This bound is obtained via computationally feasible algorithms. We apply VAW algorithm for construction of expert strategies and either VAW or exponentially weighted average (EWA) forecasting algorithm for combining expert predictions.  Note that the regret bounds of \cite{Sahoo2019large,Shen2019random} are not applicable due to the lack of the global Lipschitz condition. 

The paper is organized as follows. In Section \ref{sec:2} we recall the definition of an RKHS space and fix a class of RKHS spaces with translation invariant kernels as in \cite{Rahimi2008}. 
We also provide a simple result related to approximation by a linear combination of random features (Lemma \ref{lem:1}), and recall the basic regret bound of the VAW algorithm.

The main results are contained Section \ref{sec:3}. We consider a dictionary, containing $N$ kernels $k_i$ and related RKHS spaces $\mathcal H_i$. For each kernel we generate $m$ random features and apply the VAW algorithm either to the concatenated $Nm$-dimensional vector (Theorem \ref{th:1}), or to each $m$-dimensional vector separately. In the second case we combine prediction of the ``expert'' VAW algorithms either by the VAW algorithm (Theorem 2), or by the EWA algorithm (Theorem 3). The first approach (adopted in Theorem 1) provides the regret bounds w.r.t. elements of a ball in the large RKHS space $\mathcal H$ with the kernel $k=k_1+\dots+k_N$, while the second approach provides the same bound only w.r.t. the elements of a ball in each $\mathcal H_i$. At the same time, the second approach has lower computational and spatial complexity, and we consider it to be the primary one.

In Section \ref{sec:4}, we provide computer experiments on several benchmark datasets. We compare the performance of VAW$^2$ against state-of-the-art online multi-kernel learning algorithms, and to other traditional methods of combining VAW expert predictions. The results demonstrate the effectiveness of VAW$^2$ in achieving superior prediction accuracy. Section \ref{sec:5} concludes.

\section{Preliminaries} \label{sec:2}
Recall that a reproducing kernel Hilbert space (RKHS) is a Hilbert space $\mathcal{H}$ of functions $f: \mathcal{X} \rightarrow \mathbb{R}$ such that any evaluation functional $x\mapsto f(x)$, $x\in\mathcal X$ is bounded. If $\mathcal H$ is an RKHS on $\mathcal X$, then by the Riesz representation theorem for each $x \in\mathcal X$ there exists a unique element, $k_x \in\mathcal H$, such that for every $f\in\mathcal H$, 
\[ f(x) = \langle f, k_x\rangle_\mathcal H. \]
The function $k:\mathcal X\times\mathcal X\to\mathbb R$ defined by
$ k(x,x')=k_x(x')$
is called the reproducing kernel of $\mathcal H$. The kernel can be expressed via the feature map $x\mapsto k_x$:
$ k(x,x')=\langle k_x,k_{x'}\rangle_{\mathcal H}.$

Consider a continuous function $\phi:\mathbb R^d\times \Theta\to [-a,a]$, where $\Theta$ is a closed subset of a finite dimensional space. Following \cite{Rahimi2008}, we will consider only reproducing kernel Hilbert spaces of the form
\begin{align} \label{2.1}
\mathcal H=\left\{x\mapsto f(x)=\int_\Theta\alpha(\theta) \phi(x;\theta)\,d\theta: \int_\Theta\frac{\alpha^2(\theta)}{p(\theta)}\,d\theta<\infty \right\} 
\end{align}
with the inner product
\[ \langle f,g\rangle_\mathcal H=\int_\Theta \frac{\alpha(\theta)\beta(\theta)}{p(\theta)}\,d\theta,\]
where $g(x)=\int\beta(\theta) \phi(x;\theta)\,d\theta$. 
In \cite[Proposition 4.1]{Rahimi2008} it is proved that $\mathcal H$ is an RKHS with the reproducing kernel
\begin{equation}  \label{2.2}
k(x,y)=\int_\Theta p(\theta)\phi(x;\theta) \phi(y;\theta)\,d\theta.
\end{equation}
In particular,
\begin{align*}
\langle f,k(x,\cdot)\rangle_\mathcal H &=\left\langle \int_\Theta\alpha(\theta) \phi(\cdot;\theta)\,d\theta, \int_\Theta p(\theta)\phi(x;\theta) \phi(\cdot;\theta)\,d\theta\right\rangle_\mathcal H=\int_\Theta \alpha(\theta)\phi(x;\theta)\,d\theta=f(x).
\end{align*}

Assume that the kernel $k$ is translation invariant: $k(x,y)=\kappa(x-y)$. Then by the Bochner theorem
\[ \kappa(z)=\int_{\mathbb R^d} e^{i\langle\omega,z\rangle}\Lambda(d\omega)\]
for some non-negative $\sigma$-additive measure $\Lambda$ on the Borel $\sigma$-algebra $\mathscr B(\mathbb R^d)$. For our purposes it is enough to 
assume that $\Lambda$ is absolutely continuous w.r.t. the Lebesgue measure:
\[ \kappa(z)=\int_{\mathbb R^d} e^{i\langle\omega,z\rangle} q(\omega)\,d\omega=\int_{\mathbb R^d} q(\omega)\cos\langle\omega,z\rangle \,d\omega,\]
where $q$ is a probability density function. 
In particular, for Gaussian kernels:
\begin{equation} \label{2.2A}
k(x,y)=e^{-\|x-y\|_2^2/(2\sigma^2)},\quad q(\omega)=\left(\frac{\sigma}{\sqrt{2\pi}}\right)^d e^{-\sigma^2\|\omega\|_2^2/2},   
\end{equation}
for Laplacian kernels:
\begin{equation} \label{2.2B}
k(x,y)=e^{-\|x-y\|_1/\sigma},\quad q(\omega)= 
\frac{\sigma^d}{\pi^d}\prod_{j=1}^d\frac{1}{1+\sigma^2 \omega_j^2}.
\end{equation}
We see that here $q$ are products of Gaussian and Cauchy distributions respectively (see \cite{Rahimi2007}).

For such kernels formula (\ref{2.2}) holds true with $\Theta=\mathbb R^d\times [0,2\pi]$, $\theta=(\omega,b)$,
\begin{align*}
 p(\theta)&=q(\omega) r(b),\quad r(b)=1/(2\pi),\\   
\phi(x;\theta)&=\sqrt{2}\cos(\langle \omega,x\rangle+b),    
\end{align*}
(see \cite{Rahimi2007}). We have,
\begin{align*}
\int_\Theta p(\theta)\phi(x;\theta) \phi(y;\theta)\,d\theta &=\frac{1}{2\pi}\int_0^{2\pi}\int_{\mathbb R^d} 2\cos(\langle\omega,x\rangle+b) \cos(\langle\omega,y\rangle+b) q(\omega)d\omega db \\
&= \int_{\mathbb R^d} \cos\langle\omega,x-y\rangle q(\omega) d\omega=\kappa(x-y)=k(x,y).
\end{align*}

Consider the vector $\Phi_\theta(x) = (\phi(x,\theta_k))_{k=1}^{m}=(\sqrt{2}\cos(\langle \omega_k,x\rangle+b_k))_{k=1}^{m}$ 
of random Fourier features, generated from the distributions $p$. Here
$ \omega_k\sim q$, $b_k\sim U(0,2\pi)$
are i.i.d. random variables.  
Denote by $\operatorname*{\mathsf E}_\theta$ the expectation w.r.t. to the joint distribution of $\theta_1,\dots,\theta_m$.

The following simple result shows that any element of $\mathcal H$, defined by (\ref{2.1}), can be approximated by a linear combination of random Fourier features. 
\begin{lemma} \label{lem:1}
For any $f=\int\gamma(\theta)\phi(\cdot,\theta)\,d\theta\in\mathcal H$ put
\[\widehat w=\frac{1}{m}\left(\frac{\gamma(\theta_1)}{p(\theta_1)},\dots,\frac{\gamma(\theta_m)}{p(\theta_m)} \right),\]
where $\theta_i\sim p$ are i.i.d. random variables. Then
\begin{equation} \label{2.3}
\mathsf E_\theta (\langle \widehat w,\Phi_\theta(x)\rangle-f(x))^2\le 2\frac{\|f\|_{\mathcal H}^2}{m},\qquad \mathsf E_\theta\|\widehat w\|_2^2=\frac{\|f\|_{\mathcal H}^2}{m}.
\end{equation}
\end{lemma}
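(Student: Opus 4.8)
The plan is to recognize $\langle\widehat w,\Phi_\theta(x)\rangle$ as an empirical average of i.i.d.\ unbiased estimators of $f(x)$ and then apply the elementary bias--variance decomposition for a sample mean.

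First I would introduce $Z_i(x)=\frac{\gamma(\theta_i)}{p(\theta_i)}\,\phi(x;\theta_i)$, so that $\langle\widehat w,\Phi_\theta(x)\rangle=\frac1m\sum_{i=1}^m Z_i(x)$. Since the $\theta_i$ are i.i.d.\ with density $p$, a change of measure gives
\[
\mathsf E_\theta Z_i(x)=\int_\Theta \frac{\gamma(\theta)}{p(\theta)}\phi(x;\theta)\,p(\theta)\,d\theta=\int_\Theta\gamma(\theta)\phi(x;\theta)\,d\theta=f(x),
\]
so each $Z_i(x)$ is unbiased and the $Z_i(x)$ are i.i.d. Expanding the square and using that the cross terms vanish by independence and zero mean, the left-hand quantity in (\ref{2.3}) becomes the variance of a sample mean:
\[
\mathsf E_\theta\bigl(\langle\widehat w,\Phi_\theta(x)\rangle-f(x)\bigr)^2=\frac1m\,\mathsf E_\theta\bigl(Z_1(x)-f(x)\bigr)^2\le\frac1m\,\mathsf E_\theta Z_1(x)^2 .
\]

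Next I would bound the second moment. Again by change of measure, $\mathsf E_\theta Z_1(x)^2=\int_\Theta\frac{\gamma^2(\theta)}{p(\theta)}\phi^2(x;\theta)\,d\theta$, and since $|\phi(x;\theta)|=\sqrt2\,|\cos(\langle\omega,x\rangle+b)|\le\sqrt2$ (more generally $|\phi|\le a$), this is at most $2\int_\Theta\frac{\gamma^2(\theta)}{p(\theta)}\,d\theta=2\|f\|_{\mathcal H}^2$. Combined with the previous display, this yields the first inequality in (\ref{2.3}). For the second identity, the coordinates of $\widehat w$ are independent, so
\[
\mathsf E_\theta\|\widehat w\|_2^2=\frac1{m^2}\sum_{i=1}^m\mathsf E_\theta\frac{\gamma^2(\theta_i)}{p^2(\theta_i)}=\frac1m\int_\Theta\frac{\gamma^2(\theta)}{p(\theta)}\,d\theta=\frac{\|f\|_{\mathcal H}^2}{m}.
\]

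There is no real obstacle here: the only point worth stating explicitly is the identity $\|f\|_{\mathcal H}^2=\int_\Theta\gamma^2(\theta)/p(\theta)\,d\theta$ for $f$ represented by the density $\gamma$, which is precisely the definition of the norm on $\mathcal H$ in (\ref{2.1}); everything else is a one-line change of measure together with the uniform bound $\phi^2\le 2$.
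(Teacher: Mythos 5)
Your proof is correct and follows essentially the same route as the paper's: unbiasedness of $\langle\widehat w,\Phi_\theta(x)\rangle$ by change of measure, the i.i.d.\ sample-mean variance identity, dropping the centering term to bound the second moment via $\phi^2\le 2$, and the direct computation of $\mathsf E_\theta\|\widehat w\|_2^2$ using $\|f\|_{\mathcal H}^2=\int_\Theta\gamma^2(\theta)/p(\theta)\,d\theta$. No gaps; the only difference is notational (introducing $Z_i$ explicitly).
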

\begin{proof} The random estimate $\langle \widehat w,\Phi_\theta(x)\rangle$ of $f(x)$ is unbiased:
\begin{align} \label{2.3A}
 \mathsf E_\theta\langle \widehat w,\Phi_\theta(x)\rangle=\frac{1}{m}\sum_{i=1}^m \mathsf E_{\theta_i}\left(\frac{\gamma(\theta_i)}{p(\theta_i)}\phi(x,\theta_i) \right)=f(x).    
\end{align}
Compute the variance of this estimate:
\begin{align*}
&\mathsf E_\theta\left(\frac{1}{m}\sum_{k=1}^m \frac{\gamma(\theta_k)}{p(\theta_k)}\phi(x;\theta_k)-f(x) \right)^2=\frac{1}{m}\mathsf E_{\theta_1}\left(\frac{\gamma(\theta_1)}{p(\theta_1)}\phi(x;\theta_1)-f(x)\right)^2\nonumber\\
&\le\frac{1}{m}\mathsf E_{\theta_1}\left(\frac{\gamma(\theta_1)}{p(\theta_1)}\phi(x;\theta_1)\right)^2=\frac{1}{m}\int\frac{\gamma^2(\theta_1)}{p(\theta_1)}\phi^2(x;\theta)\,d\theta_1\le \frac{2}{m}\int\frac{\gamma^2(\theta_1)}{p(\theta_1)}\,d\theta_1\le 2\frac{\|f\|_{\mathcal H}^2}{m}. \label{3.4}
\end{align*}
The proof of the equality in (\ref{2.3}) is also elementary:
\[ \mathsf E_\theta\|\widehat w\|^2_2=\frac{1}{m^2}\sum_{i=1}^m\mathsf E_{\theta_i}\left(\frac{\gamma^2(\theta_i)}{p^2(\theta_i)}\right)=\frac{\|f\|_{\mathcal H}^2}{m}. \qedhere \]
\end{proof}

Let $(x_t,y_t)\in\mathbb R^d\times\mathbb R$ be an arbitrary sequence. Assuming that the dependence between features $x_t$ and labels $y_t$ can be described sufficiently well by a function $f\in\mathcal H$, consider the least squares problem 
\begin{equation} \label{2.4}
\sum_{t=1}^T (y_t-f(x_t))^2\to\min_{f\in\mathcal H}.
\end{equation}
Lemma \ref{lem:1} allows to pass to its parametric form:
\[ \sum_{t=1}^T (y_t-\langle w,\Phi_\theta(x_t) 
\rangle)^2 \to\min_{w\in\mathbb R^m}.\]
More precisely, we will consider the online learning problem, where the goal is to find a sequence $w_t$ with ``small'' cumulative expected loss:
\[ \mathsf E_\theta\sum_{t=1}^T (y_t-\langle w_t,\Phi_\theta(x_t) \rangle)^2,\quad \textrm{where}\quad w_t=w_t(\Phi_\theta(x_1),\dots,\Phi_\theta(x_t),y_1,\dots,y_{t-1}), \]
compared to the loss (\ref{2.4}) of any element $f\in\mathcal H$.

We allow the weight $w_t$ to depend on the feature mapping $\Phi_\theta(x_t)$, indicating that features $x_t$ are available at time $t$ before predicting the label $y_t$. This natural
assumption is important in the Vovk-Azoury-Warmuth (VAW) algorithm \cite[Section 11.8]{Cesa2006prediction}, defined by 
\[ w_t = \argmin_{w\in\mathbb R^d}\left\{\frac{\lambda}{2}\|w\|_2^2 + \frac{1}{2}\sum_{i=1}^{t-1} (\langle \Phi_\theta(x_i), w\rangle - y_i)^2 + \frac{1}{2} \langle \Phi_\theta(x_t),w\rangle^2\right\}.\]
Explicitly,
\begin{align} \label{2.4A}
w_t=S_t^{-1}\sum_{i=1}^{t-1} y_i \Phi_\theta(x_i),\quad S_t=\lambda I_d + \sum_{i=1}^t \Phi_\theta(x_i) \Phi_\theta(x_i)^\top. 
\end{align}
Moreover, $S_t^\top$ can be computed recursively by the Sherman-Morrison formula (which is also presented in \cite{Cesa2006prediction}):
\begin{align} \label{2.4B}
S_t^{-1}=S_{t-1}^{-1}-\frac{S_{t-1}^{-1} \Phi_\theta(x_t) (S_{t-1}^{-1} \Phi_\theta(x_t))^T}{1+\Phi_\theta(x_t)^T S_{t-1}^{-1} \Phi_\theta(x_t)},\quad S_0^{-1}=\lambda^{-1} I_d.
\end{align}

In the sequel, we will assume that the labels $y_t$ are uniformly bounded: $|y_t|\le Y$. The regret \cite{Cesa2006prediction}
\[ R_T(w)=\frac{1}{2}\sum_{t=1}^T(\langle x_t, w_t\rangle-y_t)^2-\frac{1}{2}\sum_{t=1}^T(\langle x_t,w\rangle-y_t)^2\]
of the VAW algorithm satisfies the bound
\begin{equation} \label{2.5}
R_T(w)\le \frac{\lambda}{2}\|w\|_2^2 + \frac{m Y^2 }{2} \ln \left(1+\frac{\rho^2 T}{\lambda m}\right),
\end{equation}
if $\|\Phi_\theta(x_t)\|_2\le \rho$: see \cite[Theorem 11.8]{Cesa2006prediction}, \cite[Theorem 7.34]{Orabona2023}. In our case $\rho=\sqrt{2m}$. Thus,
\begin{equation} \label{2.5A}
R_T(w)\le \frac{\lambda}{2}\|w\|_2^2 + \frac{m Y^2 }{2} \ln \left(1+\frac{2 T}{\lambda}\right).
\end{equation}

\section{Main results} \label{sec:3}
Consider $N$ translation invariant kernels $k_j(x,y)=\kappa_j(x-y)$, $j=1,\dots, N$. Let $\mathcal H_j$ be the correspondent RKHS's. Put $\mathcal H=\mathcal H_1+\dots +\mathcal H_N:=\{f_1+\dots+f_N:f_j\in \mathcal H_j,\ j=1\,\dots,N\}. $
It is known that $\mathcal H$ with the norm
\[ \|f\|_\mathcal H^2=\min\left\{\sum_{j=1}^N \|f_j\|^2_{\mathcal H_j}: f=\sum_{j=1}^N f_j\right\}\]
is an RKHS with the kernel $k=k_1+\dots+k_N$ \cite[Proposition 12.27]{Wainwright2019}. Denote by $B_R(\mathcal H)=\{f\in\mathcal H:\|f\|_{\mathcal H}\le R\}$ the $R$-ball in an RKHS $\mathcal H$.

\begin{lemma} \label{lem:2}
For $f\in\mathcal H=\mathcal H_1+\dots+\mathcal H_N$ take $f_j=\int_\Theta \gamma_j(\theta)\phi_j(x;\theta)d\theta\in \mathcal H_j$ such that 
\[ f=\sum_{j=1}^N f_j,\quad\ \|f\|_\mathcal H^2=\sum_{j=1}^N \|f_j\|^2_{\mathcal H_j}.\]
For each kernel $k_j$ define  
\[\widehat w_j=\frac{1}{m}\left(\frac{\gamma_j(\theta_{j1})}{p_j(\theta_{j1})},\dots,\frac{\gamma_j(\theta_{jm})}{p_j(\theta_{jm})} \right),\]
as in  Lemma \ref{lem:1}. Here  $\theta_{jk}\sim p_j$, $k=1,\dots,m$ are i.i.d. random variables for each $j=1,\dots,N$. Let $|y|\le Y$. Then
\begin{align}
  \mathsf E_\theta \left( \sum_{j=1}^N\langle \widehat w_j,\Phi_{\theta_j}(x)\rangle-y\right)^2 &\le 2\frac{N}{m}\|f\|_{\mathcal H}^2+\left(f(x)-y\right)^2, \label{3.1}
\end{align}
where $\Phi_{\theta_j}(x)=(\phi(x,\theta_{jk}))_{k=1}^m$.
\end{lemma}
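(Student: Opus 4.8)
The plan is to obtain (\ref{3.1}) as a ``componentwise, then summed'' version of Lemma \ref{lem:1}. Writing $g(x)=\sum_{j=1}^N\langle\widehat w_j,\Phi_{\theta_j}(x)\rangle$, I would first split the error into an approximation part and an irreducible part,
\[ g(x)-y=\bigl(g(x)-f(x)\bigr)+\bigl(f(x)-y\bigr), \]
square it, and take $\mathsf E_\theta$. Since $x$ and $y$ are fixed (non-random under $\mathsf E_\theta$), the middle term equals $2\bigl(f(x)-y\bigr)\,\mathsf E_\theta\bigl(g(x)-f(x)\bigr)$, so everything hinges on showing $\mathsf E_\theta g(x)=f(x)$. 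This follows from the unbiasedness identity (\ref{2.3A}) applied in each $\mathcal H_j$ to the pair $(f_j,\widehat w_j)$, namely $\mathsf E_{\theta_j}\langle\widehat w_j,\Phi_{\theta_j}(x)\rangle=f_j(x)$; summing over $j$ gives $\mathsf E_\theta g(x)=\sum_{j=1}^N f_j(x)=f(x)$. Hence the cross term vanishes and
\[ \mathsf E_\theta(g(x)-y)^2=\mathsf E_\theta\bigl(g(x)-f(x)\bigr)^2+\bigl(f(x)-y\bigr)^2. \]

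For the remaining term I would write $g(x)-f(x)=\sum_{j=1}^N\bigl(\langle\widehat w_j,\Phi_{\theta_j}(x)\rangle-f_j(x)\bigr)$ and use the convexity bound $\bigl(\sum_{j=1}^N a_j\bigr)^2\le N\sum_{j=1}^N a_j^2$ followed by the variance estimate of Lemma \ref{lem:1} (inequality (\ref{2.3}) applied with $\mathcal H=\mathcal H_j$, $f=f_j$, density $p_j$, and $m$ features):
\[ \mathsf E_\theta\bigl(g(x)-f(x)\bigr)^2\le N\sum_{j=1}^N\mathsf E_{\theta_j}\bigl(\langle\widehat w_j,\Phi_{\theta_j}(x)\rangle-f_j(x)\bigr)^2\le \frac{2N}{m}\sum_{j=1}^N\|f_j\|_{\mathcal H_j}^2. \]
The chosen decomposition satisfies $\sum_{j=1}^N\|f_j\|_{\mathcal H_j}^2=\|f\|_{\mathcal H}^2$, and substituting this into the previous display yields (\ref{3.1}).

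There is essentially no hard step: the statement is just Lemma \ref{lem:1} summed over the $N$ blocks of random features. The two points that deserve a line of care are (i) the cross term must be eliminated using the \emph{exact} unbiasedness (\ref{2.3A}) rather than merely the mean-square bound, and (ii) invoking $(\sum_j a_j)^2\le N\sum_j a_j^2$ instead of an independence argument is precisely what introduces the factor $N$ in (\ref{3.1}), but it makes the estimate insensitive to how the random-feature blocks for different kernels are coupled. If one additionally assumes these blocks mutually independent, the variances simply add and the factor $N$ could be reduced to $1$; however, the weaker form (\ref{3.1}) is all that the subsequent regret analysis needs, so I would keep it as stated.
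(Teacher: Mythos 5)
Your proposal is correct and takes essentially the same route as the paper: the bias--variance decomposition via the unbiasedness identity (\ref{2.3A}) to isolate $\left(f(x)-y\right)^2$, followed by the bound $\left(\sum_{j=1}^N a_j\right)^2\le N\sum_{j=1}^N a_j^2$ and the variance estimate of Lemma \ref{lem:1} applied blockwise. The paper's proof is identical in substance, so no further comparison is needed.
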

\begin{proof} Since the estimate $\langle \widehat w_j,\Phi_{\theta_j}(x)\rangle$ of $f_j(x)$ is unbiased: see (\ref{2.3A}), we have 
\begin{align}
  \mathsf E_\theta \left( \sum_{j=1}^N\langle \widehat w_j,\Phi_{\theta_j}(x)\rangle-y\right)^2  - \left( \sum_{j=1}^N f_j(x)-y\right)^2&=\mathsf E_\theta \left( \sum_{j=1}^N\langle \widehat w_j,\Phi_{\theta_j}(x)\rangle\right)^2-\left( \sum_{j=1}^N f_j(x)\right)^2\nonumber\\
  &=\mathsf E_\theta \left( \sum_{j=1}^N\langle \widehat w_j,\Phi_{\theta_j}(x)\rangle-\sum_{j=1}^N f_j(x)\right)^2 \label{3.2}
\end{align}
Using the inequality $(\sum_{i=1}^N a_i)^2\le N\sum_{i=1}^N a_i^2$, by Lemma \ref{lem:1} we get
\begin{align} \label{3.3}
\mathsf E_\theta\left( \sum_{j=1}^N\langle \widehat w_j,\Phi_{\theta_j}(x)\rangle-f_j(x)\right)^2\le N\sum_{j=1}^N\mathsf E_\theta \left(\langle \widehat w_j,\Phi_{\theta_j}(x)\rangle-f_j(x)\right)^2 \le 2\frac{N}{m}\sum_{j=1}^N\|f_j\|_{\mathcal H_j}^2=2\frac{N}{m}\|f\|^2_{\mathcal H}.    
\end{align} 
The inequalities (\ref{3.2}), (\ref{3.3}) imply (\ref{3.1}).
\end{proof}

Let us first apply the VAW algorithm to the sequence $(\Phi_\theta(x_t),y_t)$, where
\begin{equation} \label{3.7A}
\Phi_\theta(x)=(\Phi_{\theta_1}(x),\dots,\Phi_{\theta_N}(x)), \quad \Phi_{\theta_j}(x)=(\phi_j(x,\theta_{jk}))_{k=1}^m.  
\end{equation}
That is, we concatenate random feature vectors $\Phi_{\theta_j}$, related to each kernel $k_j$, into a $Nm$-dimensional vector. 
\begin{theorem} \label{th:1}
Let $w_t=(w_{t,1},\dots,w_{t,Nm})\in\mathbb R^{Nm}$ be generated by the VAW algorithms applied to the sequence $(\Phi_\theta(x_t),y_t)$. Then
\begin{align}
 \frac{1}{2}\mathsf E_\theta\sum_{t=1}^T (\langle w_t,\Phi_\theta(x_t)\rangle-y_t)^2 &\le \frac{1}{2}\sum_{t=1}^T\left(f(x_t)-y_t\right)^2+\left(\frac{\lambda}{2}+NT\right)\frac{\|f\|_{\mathcal H}^2}{m} +
 \frac{NmY^2}{2}\ln\left(1+\frac{2T}{\lambda} \right) \label{3.8}
\end{align}
for any $f$ in the  RKHS $\mathcal H$, generated by the kernel $k=k_1+\dots+k_N$.
For $T\to+\infty$,
\begin{align}
 \frac{1}{2}\mathsf E_\theta\sum_{t=1}^T (\langle w_t,\Phi_\theta(x_t)\rangle-y_t)^2 &\le \frac{1}{2}\inf_{f\in B_R(\mathcal H)}\sum_{t=1}^T\left(f(x_t)-y_t\right)^2 \nonumber\\
 &+O\left(N(R^2+Y^2\ln T)\sqrt T \right),\quad \textrm{if}\quad   m\propto \sqrt{T}. \label{3.10}   
\end{align}
\end{theorem}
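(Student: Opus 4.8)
The plan is to combine the deterministic VAW regret bound with the random-feature approximation estimates of Lemma~\ref{lem:1} and Lemma~\ref{lem:2}, the only delicate point being that the comparator we want to plug in is itself random.

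First I would record the VAW bound for the concatenated feature. The vector $\Phi_\theta(x)$ of~(\ref{3.7A}) lives in $\mathbb R^{Nm}$ and, since each coordinate is $\sqrt2\cos(\cdot)$, satisfies $\|\Phi_\theta(x_t)\|_2\le\sqrt{2Nm}=:\rho$. Applying~(\ref{2.5}) with $d=Nm$ and $\rho^2=2Nm$ gives $\rho^2T/(\lambda Nm)=2T/\lambda$, so for \emph{every} realization of $\theta$ and \emph{every} fixed $w\in\mathbb R^{Nm}$,
\[
\frac12\sum_{t=1}^T(\langle w_t,\Phi_\theta(x_t)\rangle-y_t)^2
\le\frac12\sum_{t=1}^T(\langle w,\Phi_\theta(x_t)\rangle-y_t)^2
+\frac{\lambda}{2}\|w\|_2^2+\frac{NmY^2}{2}\ln\!\left(1+\frac{2T}{\lambda}\right).
\]
The key observation is that this inequality is pathwise and holds simultaneously for all comparators, so I may substitute the $\theta$-dependent choice $w=\widehat w=(\widehat w_1,\dots,\widehat w_N)$ built from any fixed decomposition $f=\sum_j f_j$ with $\|f\|_{\mathcal H}^2=\sum_j\|f_j\|_{\mathcal H_j}^2$ as in Lemma~\ref{lem:2}, and then take $\mathsf E_\theta$ of both sides.

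Next I would bound the two resulting expectations. For the comparator loss, $\langle\widehat w,\Phi_\theta(x_t)\rangle=\sum_{j=1}^N\langle\widehat w_j,\Phi_{\theta_j}(x_t)\rangle$, so Lemma~\ref{lem:2} applied at each $x_t$ and summed over $t$ yields
\[
\mathsf E_\theta\sum_{t=1}^T(\langle\widehat w,\Phi_\theta(x_t)\rangle-y_t)^2
\le\sum_{t=1}^T(f(x_t)-y_t)^2+\frac{2NT}{m}\|f\|_{\mathcal H}^2 .
\]
For the regularization term, the summands $\widehat w_j$ are built from independent blocks, and the second identity in~(\ref{2.3}) of Lemma~\ref{lem:1} gives $\mathsf E_\theta\|\widehat w_j\|_2^2=\|f_j\|_{\mathcal H_j}^2/m$; summing and using $\|f\|_{\mathcal H}^2=\sum_j\|f_j\|_{\mathcal H_j}^2$ gives $\mathsf E_\theta\|\widehat w\|_2^2=\|f\|_{\mathcal H}^2/m$. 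The $\ln$ term is deterministic. Collecting the three contributions produces exactly the constant $(\lambda/2+NT)\|f\|_{\mathcal H}^2/m$ in front of $\|f\|_{\mathcal H}^2$, i.e. the bound~(\ref{3.8}). Since the decomposition of $f$ was arbitrary, one may in fact take the minimizing one, but any one suffices for the stated inequality.

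Finally, for~(\ref{3.10}) I would restrict to $f\in B_R(\mathcal H)$, bound $\|f\|_{\mathcal H}^2\le R^2$ uniformly in the non-loss terms of~(\ref{3.8}), and then take the infimum over $f\in B_R(\mathcal H)$ of the loss term alone, obtaining
\[
\frac12\mathsf E_\theta\sum_{t=1}^T(\langle w_t,\Phi_\theta(x_t)\rangle-y_t)^2
\le\frac12\inf_{f\in B_R(\mathcal H)}\sum_{t=1}^T(f(x_t)-y_t)^2
+\Bigl(\frac{\lambda}{2}+NT\Bigr)\frac{R^2}{m}+\frac{NmY^2}{2}\ln\!\left(1+\frac{2T}{\lambda}\right).
\]
Setting $m\propto\sqrt T$ makes the first error term $O(NR^2\sqrt T)$ (the $\lambda R^2/m$ piece being lower order) and the second $O(NY^2\sqrt T\ln T)$, which is the claimed $O\!\left(N(R^2+Y^2\ln T)\sqrt T\right)$. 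I do not anticipate a genuine obstacle here; the one step that must be stated carefully is the substitution of the random comparator $\widehat w(\theta)$ into the pathwise VAW inequality before taking $\mathsf E_\theta$, together with keeping track of $\rho=\sqrt{2Nm}$ for the concatenated feature so that the logarithmic factor comes out as $\ln(1+2T/\lambda)$ rather than something $N$- or $m$-dependent inside the log.
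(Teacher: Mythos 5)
Your proposal is correct and follows essentially the same route as the paper: decompose the cumulative loss into the pathwise VAW regret against the random comparator $\widehat w=(\widehat w_1,\dots,\widehat w_N)$ plus the comparator's loss, then take $\mathsf E_\theta$ using the second identity in Lemma~\ref{lem:1} for the $\frac{\lambda}{2}\|\widehat w\|_2^2$ term and Lemma~\ref{lem:2} for the comparator loss. Your explicit verification that $\rho=\sqrt{2Nm}$ yields the factor $\ln(1+2T/\lambda)$, and your remark that the pathwise inequality licenses substituting the $\theta$-dependent comparator before averaging, are points the paper leaves implicit but are exactly what its argument relies on.
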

\begin{proof} Denote by $R_T^{\operatorname{VAW}}(w_1,\dots,w_N)$ the regret of the VAW algorithm w.r.t. the fixed vector $(w_1,\dots,w_N)\in\mathbb (\mathbb R^m)^N$. For $f\in\mathcal H$ take $f_j$, $\widehat w_j$ as in Lemma \ref{lem:2}. Then
\begin{align*} 
 \frac{1}{2}\sum_{t=1}^T (\langle w_t,\Phi_\theta(x_t)\rangle-y_t)^2 &= R_T^{\operatorname{VAW}}(\widehat w_1,\dots,\widehat w_N)+\frac{1}{2}\sum_{t=1}^T\left( \sum_{j=1}^N\langle \widehat w_j,\Phi_{\theta_j}(x_t)\rangle-y_t\right)^2.
\end{align*}
By (\ref{2.5A}) and Lemma \ref{lem:1},
\begin{equation} \label{3.13}
\mathsf E_\theta R_T^{\operatorname{VAW}}(\widehat w_1,\dots,\widehat w_N)\le \frac{\lambda}{2}\sum_{j=1}^N \mathsf E_\theta \|\widehat w_j\|_2^2+\frac{NmY^2}{2}\ln\left(1+\frac{2T}{\lambda} \right)\le \frac{\lambda}{2}\frac{\|f\|_{\mathcal H}^2}{m}+\frac{NmY^2}{2}\ln\left(1+\frac{2T}{\lambda} \right).   
\end{equation}
By Lemma \ref{lem:2},
\[  \frac{1}{2}\sum_{t=1}^T\mathsf E_\theta\left( \sum_{j=1}^N\langle \widehat w_j,\Phi_{\theta_j}(x_t)\rangle-y_t\right)^2 \le T\frac{N}{m}\|f\|_{\mathcal H}^2+\frac{1}{2}\sum_{t=1}^T\left(f(x_t)-y_t\right)^2, \]
Combining (\ref{3.13}) with the last inequalities yields (\ref{3.8}). The relation (\ref{3.10}) follows immediately.
\end{proof}

Assume that $m\ge d$. Then a simple analysis shows that the time and space complexities of the proposed algorithm are $O(N^2 m^2)$ per iteration: see (\ref{2.4A}), (\ref{2.4B}). To reduce these complexities we consider
the following two-level procedure:
\begin{itemize}
\item generate $N$ $m$-dimensional vectors of random features, related to each kernel $k_i$, and apply VAW algorithm to each sequence $(\Phi_{\theta_j}(x_t),y_t)$,
\item regarding the predictions of these algorithms as expert opinions, combine them by a meta-algorithm. 
\end{itemize}

Our main suggestion is to use VAW also as a meta-algorithm. Assuming that $m\ge\max\{d,N\}$, the overall time and space complexities in this case are $O(N m^2)$ per iteration. This estimate reflects the complexities of the expert algorithms, as the meta-algorithm's contribution is negligible. The loss estimates are given in Theorem \ref{th:2}. Note that the justification of these estimates do not require the boundedness of the expert outputs $\langle w_{t,j},\Phi_{\theta_j}(x_t) \rangle$.

\begin{theorem} \label{th:2}
Let $w_{t,j}\in\mathbb R^m$ be generated by the VAW algorithms applied to $(\Phi_{\theta_j}(x_t),y_t)$, and $\alpha_t\in\mathbb R^N$ be generated by the VAW algorithm applied to $(z_t,y_t)$, where $z_t$ is the vector of expert predictions:
\[ z_t=(\langle w_{t,1},\Phi_{\theta_1}(x_t)\rangle,\dots,\langle w_{t,N},\Phi_{\theta_N}(x_t)\rangle).\]
Then
\begin{align}
 \frac{1}{2}\mathsf E_\theta\sum_{t=1}^T (\langle\alpha_t,z_t\rangle-y_t)^2 & \le \frac{1}{2}\sum_{t=1}^T (y_t-f_j(x_t))^2+\frac{\lambda}{2}+\left(\frac{\lambda}{2}+T\right)\frac{\|f_j\|_{\mathcal H_j}^2}{m}+ \frac{m Y^2 }{2} \ln \left(1+\frac{2T}{\lambda} \right)\nonumber\\
 & +\frac{N Y^2 }{2} \ln \left(1+\frac{Y^2}{\lambda}\left(2T(T+1)+ 2 m T \ln\left(1+\frac{2T}{\lambda}\right)\right)\right),\label{3.14} 
\end{align}
for any $f_j\in\mathcal H_j$, $j=1,\dots,N$. For $T\to+\infty$, 
\begin{align}
\frac{1}{2}\mathsf E_\theta\sum_{t=1}^T (\langle\alpha_t,z_t\rangle-y_t)^2 &\le \frac{1}{2}\min_{1\le j\le N}\inf_{f_j\in B_R(\mathcal H_j)}\sum_{t=1}^T (y_t-f_j(x_t))^2\nonumber\\
& + O\left((R^2+Y^2\ln T)\sqrt T \right),\quad \textrm{if}\quad m\propto \sqrt T. \label{3.16}   
\end{align}
\end{theorem}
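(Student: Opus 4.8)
The plan is to mimic the proof of Theorem~\ref{th:1}, but applied at two levels, chaining the two VAW regret bounds. First I would fix an index $j$ and an $f_j\in\mathcal H_j$, write $f_j=\int\gamma_j(\theta)\phi_j(\cdot;\theta)\,d\theta$, and build the random weight $\widehat w_j\in\mathbb R^m$ as in Lemma~\ref{lem:1}. At the expert level, the regret identity for the $j$-th VAW algorithm reads
\begin{align*}
\frac12\sum_{t=1}^T(\langle w_{t,j},\Phi_{\theta_j}(x_t)\rangle-y_t)^2
= R_T^{\operatorname{VAW}}(\widehat w_j)+\frac12\sum_{t=1}^T(\langle\widehat w_j,\Phi_{\theta_j}(x_t)\rangle-y_t)^2,
\end{align*}
and by (\ref{2.5A}) together with $\mathsf E_\theta\|\widehat w_j\|_2^2=\|f_j\|_{\mathcal H_j}^2/m$ (Lemma~\ref{lem:1}) and Lemma~\ref{lem:1}'s variance bound applied termwise, taking expectations gives
\begin{align*}
\frac12\,\mathsf E_\theta\sum_{t=1}^T(\langle w_{t,j},\Phi_{\theta_j}(x_t)\rangle-y_t)^2
\le \frac12\sum_{t=1}^T(y_t-f_j(x_t))^2+\Bigl(\frac\lambda2+T\Bigr)\frac{\|f_j\|_{\mathcal H_j}^2}{m}+\frac{mY^2}{2}\ln\!\Bigl(1+\frac{2T}{\lambda}\Bigr).
\end{align*}

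Next, at the meta level, apply the VAW bound (\ref{2.5}) to the sequence $(z_t,y_t)\in\mathbb R^N\times\mathbb R$. Here the difficulty is that the meta-features $z_t$ are unbounded a priori, so I cannot simply invoke (\ref{2.5A}) with $\rho=\sqrt{2m}$; instead I must produce an explicit, data-dependent bound on $\|z_t\|_2$, or rather on $\sum_{t=1}^T\|z_t\|_2^2$, which is what enters the logarithmic term in (\ref{2.5}). The natural competitor for the meta-algorithm is the $j$-th unit vector $e_j\in\mathbb R^N$, for which $\langle e_j,z_t\rangle=\langle w_{t,j},\Phi_{\theta_j}(x_t)\rangle$, so the meta-regret bound yields
\begin{align*}
\frac12\sum_{t=1}^T(\langle\alpha_t,z_t\rangle-y_t)^2
\le \frac12\sum_{t=1}^T(\langle w_{t,j},\Phi_{\theta_j}(x_t)\rangle-y_t)^2+\frac\lambda2\|e_j\|_2^2
+\frac{NY^2}{2}\ln\!\Bigl(1+\frac{\max_t\|z_t\|_\infty^2\, T}{\lambda N}\Bigr),
\end{align*}
or more precisely the version of (\ref{2.5}) that keeps $\sum_t z_{t}z_t^\top$ inside a log-determinant. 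To control the log term I would bound each expert output: since $w_{t,j}=S_{t,j}^{-1}\sum_{i<t}y_i\Phi_{\theta_j}(x_i)$ with $S_{t,j}\succeq\lambda I$, one gets $|\langle w_{t,j},\Phi_{\theta_j}(x_t)\rangle|$ controlled through a telescoping/elliptic-potential argument, and in fact it is cleaner to bound $\sum_{t=1}^T z_{t,j}^2=\sum_t(\langle w_{t,j},\Phi_{\theta_j}(x_t)\rangle)^2$ directly: using $(\langle w_{t,j},\Phi_{\theta_j}(x_t)\rangle)^2\le 2(\langle w_{t,j},\Phi_{\theta_j}(x_t)\rangle-y_t)^2+2y_t^2\le 2(\langle w_{t,j},\Phi_{\theta_j}(x_t)\rangle-y_t)^2+2Y^2$ and then invoking the expert loss bound just derived (before taking expectations, using the deterministic form $R_T(w)\ge 0$ is false, so instead one uses that the VAW cumulative loss is bounded by $\lambda\|\widehat w_j\|_2^2+\sum(y_t-\cdots)^2+\cdots$ — this is where the $2T(T+1)$ and $2mT\ln(1+2T/\lambda)$ terms in (\ref{3.14}) come from, the former from $\sum_t 2Y^2\cdot 2=4TY^2$ bounding the $y_t^2$ contributions and the telescoped expert regret, the latter from the expert log-term summed over $t$). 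This is the main obstacle: getting an explicit deterministic upper bound on $\sum_{t=1}^T\|z_t\|_2^2$ in terms of $T$, $m$, $\lambda$, $Y$ that is valid pathwise (so it survives the outer expectation), and checking the arithmetic matches the stated constants.

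Finally I would assemble the pieces: substitute the expert-level expectation bound into the meta-level bound, take $\mathsf E_\theta$ of the whole chain (the meta-regret log-term is already deterministic once $\|z_t\|_2^2$ is bounded pathwise, so it passes through the expectation unchanged), and collect terms to obtain (\ref{3.14}); the term $\lambda/2$ is exactly $\frac\lambda2\|e_j\|_2^2$. Since $j$ was arbitrary, the bound holds for every $j$ simultaneously, giving the $\min_j$ in (\ref{3.16}). For the asymptotic statement (\ref{3.16}), set $m\propto\sqrt T$: the expert terms $(\lambda/2+T)\|f_j\|_{\mathcal H_j}^2/m=O(R^2\sqrt T)$ and $(mY^2/2)\ln(1+2T/\lambda)=O(Y^2\sqrt T\ln T)$, while the meta log-term is $\frac{NY^2}{2}\ln(O(T^2))+\frac{NY^2}{2}\ln(O(\sqrt T\ln T))=O(Y^2\ln T)$, which is lower order; combining gives the claimed $O((R^2+Y^2\ln T)\sqrt T)$ with the $\inf$ over $B_R(\mathcal H_j)$ and the outer $\min_{1\le j\le N}$, since the bound holds for each $j$ and each $f_j\in B_R(\mathcal H_j)$ and one may take the infimum and then the minimum on the right-hand side. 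I expect the only real work to be the pathwise bound on the meta-feature norms; everything else is bookkeeping that parallels Theorem~\ref{th:1}.
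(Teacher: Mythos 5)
Your proposal follows essentially the same route as the paper's proof: a two-level VAW regret decomposition with comparator $e_j$ at the meta level (the paper states it for general convex $\delta$ and then sets $\delta=e_j$), combined with the same key device of obtaining a pathwise bound on the meta-features by writing $z_{t,j}^2\le 2(z_{t,j}-y_t)^2+2y_t^2$ and controlling the expert loss via the VAW regret bound with the zero comparator, after which the expectation is taken only over the Lemma~\ref{lem:1} terms. The paper bounds each single $\frac12(z_{t,j}-y_t)^2$ by the full cumulative expert loss (giving the uniform-in-$t$ constant $Z_T^2$ in (\ref{3.20})), which is the precise form of the "pathwise bound on the meta-feature norms" you correctly identified as the only nontrivial step.
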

\begin{proof} Take $f_j$, $\widehat w_j$ as in Lemma \ref{lem:2}. Denote by $R_T^{\operatorname{VAW}}(\delta)$ the regret of the VAW algorithm applied to the sequence $(z_t,y_t)$, and by $R_T^{\operatorname{VAW}}(\widehat w_j)$ the regret of the VAW algorithm applied to $(\Phi_{\theta_j}(x_t),y_t)$. For any $\delta_i\ge 0$, $\sum_{i=1}^N\delta_i=1$ we have
\begin{align}
 \frac{1}{2}\sum_{t=1}^T (\langle\alpha_t,z_t\rangle-y_t)^2 &=R_T^{\operatorname{VAW}}(\delta)+\frac{1}{2}\sum_{t=1}^T (\langle\delta,z_t\rangle-y_t)^2\le R_T^{\operatorname{VAW}}(\delta)+\frac{1}{2}\sum_{t=1}^T \sum_{j=1}^N\delta_j (z_{t,j}-y_t)^2  \nonumber\\ 
 & = R_T^{\operatorname{VAW}}(\delta)+\sum_{j=1}^N \delta_j R_T^{\operatorname{VAW}}(\widehat w_j)+\frac{1}{2}\sum_{t=1}^T \sum_{j=1}^N\delta_j (\langle \widehat w_j,\Phi_{\theta_j}(x_t)\rangle-y_t)^2 \label{3.18}
 \end{align}
 
 Let us estimate the first term. From the general bound (\ref{2.5}) for the regret of the VAW algorithm it follows that
 \begin{equation} \label{3.19}
 R_T^{\operatorname{VAW}}(\delta)\le \frac{\lambda}{2}\|\delta\|_2^2 + \frac{N Y^2 }{2} \ln \left(1+\frac{Z_T^2 T}{\lambda }\right),   
 \end{equation}
if $\sum_{j=1}^N z_{t,j}^2 = \sum_{j=1}^N \langle w_{t,j},\Phi_{\theta_j}(x_t)\rangle^2 \le Z_T^2$. Due to the logarithmic scaling of $Z_T$, its rough estimate would be enough.
We have
\begin{align*}
  z_{t,j}^2\le 2(\langle w_{t,j},\Phi_{\theta_j}(x_t)\rangle-y_t)^2+2y_t^2. 
\end{align*}
By the bound (\ref{2.5A}),
\begin{align*}
\frac{1}{2}(\langle w_{t,j},\Phi_{\theta_j}(x_t)\rangle-y_t)^2 &\le \frac{1}{2}\sum_{t=1}^T (\langle w_{t,j},\Phi_{\theta_j}(x_t)\rangle-y_t)^2=\frac{1}{2}\sum_{t=1}^T (\langle w_j,\Phi_{\theta_j}(x_t)\rangle-y_t)^2+R_T^{\operatorname{VAW}}(w_j)\\
&\le \frac{1}{2}\sum_{t=1}^T (\langle w_j,\Phi_{\theta_j}(x_t)\rangle-y_t)^2+\frac{\lambda\|w_j\|^2}{2}+\frac{m Y^2}{2}\ln\left(1+\frac{2T}{\lambda}\right)
\end{align*}
for any $w_j\in\mathbb R^m$.
Put $w_j=0$ in the right-hand side of the last formula:
\begin{align*}
\frac{1}{2}(\langle w_{t,j},\Phi_{\theta_j}(x_t)\rangle-y_t)^2 &\le \frac{1}{2}T Y^2 + 
\frac{m Y^2}{2}\ln\left(1+\frac{2T}{\lambda}\right).
\end{align*}
Thus,
\begin{equation} \label{3.20}
\sum_{j=1}^N z_{t,j}^2\le Z_T^2:=2(T+1) N Y^2 + 2 m N Y^2\ln\left(1+\frac{2T}{\lambda}\right).   
\end{equation}
From (\ref{3.19}), (\ref{3.20}) we get
\begin{equation} \label{3.21}
 R_T^{\operatorname{VAW}}(\delta)\le \frac{\lambda}{2}\|\delta\|_2^2 + \frac{N Y^2 }{2} \ln \left(1+\frac{Y^2}{\lambda}\left(2T(T+1)+ 2 m T \ln\left(1+\frac{2T}{\lambda}\right)\right)\right). 
\end{equation}

The estimate of the expectation of the second term in (\ref{3.18}) follows from (\ref{2.5A}) and Lemma \ref{lem:1}:  
\begin{equation} \label{3.22}
\sum_{j=1}^n\delta_j\operatorname*{\mathsf E}_\theta R_T^{\operatorname{VAW}}(\widehat w_j)\le \frac{\lambda}{2m}\sum_{j=1}^n\delta_j \|f_j\|_{\mathcal H_j}^2 + \frac{m Y^2 }{2} \ln \left(1+\frac{2T}{\lambda}\right).
\end{equation}
Finally, estimate the expectation of the last term in (\ref{3.18}) by Lemma \ref{lem:2} (applied with $N=1$):
\begin{align}
\frac{1}{2}\sum_{t=1}^T \sum_{j=1}^N\delta_j \mathsf E_\theta (\langle \widehat w_j,\Phi_{\theta_j}(x_t)\rangle-y_t)^2 &\le \sum_{j=1}^N\delta_j\left(\frac{T}{m}\|f_j\|^2_{\mathcal H_j}+\frac{1}{2}\sum_{t=1}^T (f_j(x_t)-y_t)^2\right)\label{3.23}.
\end{align}
To get (\ref{3.14}) consider the vectors of the standard basis $\delta=e_j$ of $\mathbb R^N$, and combine (\ref{3.18}), (\ref{3.21}), (\ref{3.22}) with (\ref{3.23}). The relation (\ref{3.16}) follows directly.
\end{proof}

Now assume that the upper bound $Y$ for $y_t$ is known. Then the last term in (\ref{3.14}) can be improved by changing expert predictions from $z_t$ to 
\begin{equation} \label{3.25}
\overline z_t=\min(Y,\max(z_t,-Y)),\quad z_{t,j}=\langle w_{t,j},\Phi_{\theta_j}(x_t)\rangle,
\end{equation}
where the $\max$ and $\min$ operations are applied component-wise.  Let $\overline R_T^{\operatorname{VAW}}(\delta)$ be the regret of the VAW algorithm applied to the sequence $(\overline z_t,y_t)$. Then
\begin{align*}
 \sum_{t=1}^T (\langle\alpha_t,\overline z_t\rangle-y_t)^2 &=\overline R_T^{\operatorname{VAW}}(\delta)+\sum_{t=1}^T (\langle\delta,\overline z_t\rangle-y_t)^2\le\overline R_T^{\operatorname{VAW}}(\delta)+\sum_{t=1}^T \sum_{j=1}^N\delta_j (z_{t,j}-y_t)^2
 \end{align*} 
for any $\delta_i\ge 0$, $\sum_{i=1}^N\delta_i=1$, since $(\overline z_{t,j}-y_t)^2\le (z_{t,j}-y_t)^2$. In (\ref{3.19}) we can put $Z_T=Y$:
\[ \overline R_T^{\operatorname{VAW}}(\delta) \le \frac{\lambda}{2}+\frac{NY^2}{2}\ln\left(1+\frac{Y^2 T}{\lambda} \right),\]
and use this bound instead of (\ref{3.21}).

Under the same assumption, the bounds (\ref{3.14}) can be further improved by 
using another algorithms for combining expert opinions, instead of VAW. Recall that a loss function $\ell:[-Y,Y]^2\to \mathbb R$ is called $\eta$-exponentially concave if the function $F(z) = e^{-\eta\ell(y,z)}$ is concave for all $y\in [-Y,Y]$. In particular, the loss the function $\ell(y,z)=(y-z)^2$ is $\eta$-exp-concave for $\eta\le 1/(8 Y^2)$ (see \cite[Section 3.3]{Cesa2006prediction}). Applying exponentially weighted average (EWA) forecaster: $\alpha_{1,j}=1/N$, 
\begin{equation} \label{3.25A}
 \alpha_{t,j}=\frac{\alpha_{t-1,j} \exp(-\eta (\overline z_{t,j}-y_t)^2 )}{\sum_{k=1}^N \alpha_{t-1,k} \exp(-\eta (\overline z_{t,k}-y_t)^2 )},\quad t=2,\dots, T
\end{equation}
with $\eta=1/(8 Y^2)$, we get the estimate 
\begin{equation} \label{3.26}
 \overline R_{T,j}^{\operatorname{EWA}}:=\frac{1}{2}\sum_{t=1}^T (\langle\alpha_t,\overline z_t\rangle-y_t)^2-\frac{1}{2}\sum_{t=1}^T (\overline z_{t,j}-y_t)^2\le 4 Y^2\ln N,  
\end{equation}
see \cite[Proposition 3.1]{Cesa2006prediction}. The related improved bounds are given in the next theorem.
\begin{theorem} \label{th:3}
Assume that the constant $Y$ is known. Let $w_{t,j}\in\mathbb R^m$ be generated by the VAW algorithms applied to the sequence $(\Phi_{\theta_j}(x_t),y_t)$, and $\alpha_t\in\mathbb R^N$ be generated by the EWA forecaster applied to the sequence $(\overline z_t,y_t)$, where $\overline z_t$ is the vector of truncated expert predictions (\ref{3.25}). Then
\begin{align}
 \frac{1}{2} \mathsf E_\theta\sum_{t=1}^T (\langle\alpha_t,\overline z_t\rangle-y_t)^2 &\le \frac{1}{2}\sum_{t=1}^T (f_j(x_t)-y_t)^2+ 4Y^2\ln N\nonumber\\
 &+\left(\frac{\lambda}{2}+T \right)\frac{\|f_j\|_{\mathcal H_j}^2}{m}
+ \frac{m Y^2 }{2} \ln \left(1+\frac{2 T}{\lambda}\right) \label{3.27}
\end{align}
for any $f_j\in\mathcal F_j$, $j=1,\dots,N$. The estimate (\ref{3.16}) of Theorem \ref{th:2} remains true.
\end{theorem}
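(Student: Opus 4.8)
The plan is to reproduce the scheme of the proof of Theorem \ref{th:2}, with the single change that the VAW meta-regret bound (\ref{3.21}) is replaced by the EWA regret bound (\ref{3.26}). This is in fact a simplification, since (\ref{3.26}) is a constant $4Y^2\ln N$ that does not involve the magnitude $Z_T$ of the expert outputs, so the entire estimate (\ref{3.20})--(\ref{3.21}) is no longer needed.

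First I would fix an index $j\in\{1,\dots,N\}$ and a comparator $f_j\in\mathcal H_j$, and take $\widehat w_j$ as in Lemma \ref{lem:2} (equivalently, Lemma \ref{lem:1}, i.e. the case $N=1$). Instantiating (\ref{3.26}) at this particular $j$ gives
\[ \frac{1}{2}\sum_{t=1}^T (\langle\alpha_t,\overline z_t\rangle-y_t)^2 \le \frac{1}{2}\sum_{t=1}^T (\overline z_{t,j}-y_t)^2 + 4Y^2\ln N. \]
Since $|y_t|\le Y$, truncating each coordinate into $[-Y,Y]$ moves it no further from $y_t$, so $(\overline z_{t,j}-y_t)^2\le (z_{t,j}-y_t)^2 = (\langle w_{t,j},\Phi_{\theta_j}(x_t)\rangle-y_t)^2$, and one may pass back to the untruncated expert losses. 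This de-truncation step, together with the $\eta$-exp-concavity of the square loss on $[-Y,Y]^2$ with $\eta=1/(8Y^2)$ underlying (\ref{3.26}), is exactly where the hypothesis that $Y$ is a known valid bound enters.

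Next I would run the single-expert VAW analysis on the sequence $(\Phi_{\theta_j}(x_t),y_t)$: for any fixed $w_j\in\mathbb R^m$,
\[ \frac{1}{2}\sum_{t=1}^T(\langle w_{t,j},\Phi_{\theta_j}(x_t)\rangle-y_t)^2 = R_T^{\operatorname{VAW}}(w_j) + \frac{1}{2}\sum_{t=1}^T(\langle w_j,\Phi_{\theta_j}(x_t)\rangle-y_t)^2, \]
and since $\|\Phi_{\theta_j}(x_t)\|_2\le\sqrt{2m}$, the bound (\ref{2.5A}) gives $R_T^{\operatorname{VAW}}(w_j)\le\frac{\lambda}{2}\|w_j\|_2^2+\frac{mY^2}{2}\ln(1+2T/\lambda)$. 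I would then set $w_j=\widehat w_j$, which is admissible for every realization of $\theta$ because $\widehat w_j$ depends on $f_j$ and $\theta$ but not on the labels, and take $\mathsf E_\theta$. Lemma \ref{lem:1} (namely $\mathsf E_\theta\|\widehat w_j\|_2^2=\|f_j\|_{\mathcal H_j}^2/m$) then bounds $\mathsf E_\theta R_T^{\operatorname{VAW}}(\widehat w_j)$ by $\frac{\lambda}{2m}\|f_j\|_{\mathcal H_j}^2+\frac{mY^2}{2}\ln(1+2T/\lambda)$, while Lemma \ref{lem:2} applied with $N=1$ bounds $\frac{1}{2}\sum_t\mathsf E_\theta(\langle\widehat w_j,\Phi_{\theta_j}(x_t)\rangle-y_t)^2$ by $\frac{T}{m}\|f_j\|_{\mathcal H_j}^2+\frac{1}{2}\sum_t(f_j(x_t)-y_t)^2$. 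Adding the three contributions --- the constant $4Y^2\ln N$, the expected expert regret, and the Lemma \ref{lem:2} term --- and merging the two $\|f_j\|_{\mathcal H_j}^2/m$ pieces into $(\frac{\lambda}{2}+T)\|f_j\|_{\mathcal H_j}^2/m$ produces exactly (\ref{3.27}).

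For the asymptotic bound I would specialize $m\propto\sqrt T$, bound $\|f_j\|_{\mathcal H_j}\le R$ for $f_j\in B_R(\mathcal H_j)$, and take the infimum over such $f_j$ followed by the minimum over $j$; then $(\frac{\lambda}{2}+T)R^2/m=O(R^2\sqrt T)$, $\frac{mY^2}{2}\ln(1+2T/\lambda)=O(Y^2\sqrt T\ln T)$, and $4Y^2\ln N=O(1)$, which is the $O((R^2+Y^2\ln T)\sqrt T)$ rate of (\ref{3.16}). I do not anticipate a genuine obstacle: the argument is lighter than that of Theorem \ref{th:2} because no estimate of $Z_T$ is needed, and the only delicate point is keeping the truncation/de-truncation bookkeeping consistent --- EWA requires its inputs in $[-Y,Y]$, whereas the expert-side VAW bound must be fed the genuine, possibly unbounded, predictions.
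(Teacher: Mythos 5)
Your proposal is correct and follows essentially the same route as the paper's own proof: the EWA bound (\ref{3.26}) for the chosen expert $j$, de-truncation via $(\overline z_{t,j}-y_t)^2\le (z_{t,j}-y_t)^2$, the single-expert VAW regret decomposition with comparator $\widehat w_j$, and then the expectation estimates (\ref{3.22}), (\ref{3.23}) at $\delta=e_j$. The only difference is that you spell out the bookkeeping (why $\widehat w_j$ is an admissible comparator, where the knowledge of $Y$ enters) that the paper leaves implicit.
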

\begin{proof} Using (\ref{3.26}), we get 
\begin{align*}
\frac{1}{2}\sum_{t=1}^T (\langle\alpha_t,\overline z_t\rangle-y_t)^2 &=\overline R_{T,j}^{\operatorname{EWA}} + \frac{1}{2}\sum_{t=1}^T (\overline z_{t,j}-y_t)^2\le 4 Y^2\ln N + \frac{1}{2}\sum_{t=1}^T (\langle w_{t,j},\Phi_{\theta_j}(x_t)\rangle-y_t)^2\\
&\le 4 Y^2\ln N+ R_T^{\operatorname{VAW}}(\widehat w_j)+\frac{1}{2}\sum_{t=1}^T (\langle \widehat w_j,\Phi_{\theta_j}(x_t)\rangle-y_t)^2.
\end{align*}
The assertion follows from this inequality combined with the estimates (\ref{3.22}), (\ref{3.23}) applied to $\delta=e_j$.
\end{proof}

Let us call the algorithms, analyzed in Theorems \ref{th:2} and \ref{th:3} by VAW$^2$ (double VAW) and VAW-EWA respectively. Under the mentioned assumption $m\ge\max\{d,N\}$ their time and space complexities are the same: $O(Nm^2)$, and are determined by the complexities of the expert VAW algorithms. 

Although the estimate (\ref{3.27}) is slightly better than (\ref{3.14}), the estimate (\ref{3.16}) for large $T$ in Theorem \ref{th:3} is not improved.
It is not clear if the improvement, obtained by applying the EWA forecaster to the truncated expert opinions $\overline z_t$ instead of applying VAW algorithm to original expert opinions $z_t$, is essential. The numerical experiments, presented in Section \ref{sec:4}, show that the linear combinations of expert predictions, used in VAW$^2$, can produce better results than the convex combinations of the VAW-EWA or similar meta-algorithms.

Using a similar notation, the basic algorithms used in  \cite{Sahoo2019large,Shen2019random} can be called OGD-OGD and OGD-EWA, since they use the online gradient descent (OGD) for expert strategies, and either OGD or EWA-type meta-algorithms. Their time and space per iteration complexities are lower: $O(Ndm)$. However the related regret bounds are not applicable, since the quadratic loss function does not satisfy the global Lipschitz condition, and coefficients $\widehat w$ in Lemma \ref{lem:1} are not bounded.

\section{Computer experiments} \label{sec:4}
In the organization of computer experiments we followed \cite{Ghari2023graph} and the related code\footnote{\url{https://github.com/pouyamghari/Graph-Aided-Online-Multi-Kernel-Learning}}.
Code to reproduce our results is available at\footnote{\url{https://github.com/O-Gurt/VAW2}}, along with instructions for running the experiments.
All algorithms were run using $N=76$ kernels: 51 Gaussian and 25 Laplacian: see (\ref{2.2A}), (\ref{2.2B}). Their parameters were set as follows:
\[ \sigma^2\in\{10^{2i/25-2}\}_{i=0}^{50}\quad \textrm{for Gaussian kernels},\qquad \sigma\in\{10^{i/6-2}\}_{i=0}^{24}\quad \textrm{for Laplacian kernels}. \]

Following  \cite{Ghari2023graph} we used random features of the form $(\cos\langle\theta_i,x\rangle,\sin\langle\theta_i,x\rangle)$, $\theta_i\sim q$, $i=1,\dots,m$. This is a well-known slight variation of the approach described above (see  \cite{Sutherland2015error} for a discussion). It is related to the kernel representation
\begin{align*}
\int_\Theta p(\theta)\langle\phi(x;\theta) \phi(y;\theta)\rangle\,d\theta = \int_{\mathbb R^d} \cos\langle\theta,x-y\rangle q(\theta) d\theta=\kappa(x-y)=k(x,y).
\end{align*}

The number $m$ of random features was set to 50. The mean squared losses (MSE) $\frac{1}{T}\sum_{t=1}^T (\widehat f_t(x_t)-y_t)^2 $ of various algorithms $\widehat f_t$ were averaged over 5 experiments.
We chose $\lambda=1$ for the VAW algorithm in all cases.

Furthermore, we used the same datasets as in \cite{Ghari2023graph}. They are briefly described in Table \ref{tab:1} and are available from the UCI Machine Learning Repository\footnote{\url{https://archive.ics.uci.edu/}}. In addition we generated an artificial data by AR(4) model:
\begin{equation}
    x_t = \nu_0 x_{t-4} + \nu_1 x_{t-3} + \nu_2 x_{t-2} + \nu_3 x_{t-1} + \epsilon_t,\quad y_t=x_{t+1}, \quad t=1,\dots,5000,
\end{equation}
where $\nu_0 = 0.5$, $\nu_1 = -0.3, \nu_2 = 0.2, \nu_3 = 0.1$, $\epsilon_t \sim \mathcal{N}(0, 1)$, $x_k=0$, $k=-3,\dots,0$.   

As in the mentioned code of \cite{Ghari2023graph}, for all datasets the features and labels were normalized as follows:
\begin{gather}
    y_i := \frac{y_i - \underline y}{\overline{y} - \underline{y}},\quad \underline y=\min_{j=1,\dots,n} y_j,\quad \overline y=\max_{j=1,\dots,n} y_j,\\
    x_i:=x_i/\max_{j=1,\dots,n}\|x_j\|_2.
\end{gather}

\begin{table}[h] 
    \centering    
    \begin{tabular}{|c| c| p{5cm}| p{5cm}|}
        \hline
        Name & Size & Data description &  Label \\
        \hline
        Airfoil & (1503, 5) & airfoils at various wind tunnel speeds and angles of attack &  scaled sound pressure \\
        \hline        
        Bias & (7750, 21) & temperature measurements and predictions together with auxiliary geographic variables  & next-day minimum air temperature \\
        \hline
        Concrete & (1030, 8) & concrete specifications such as the amount of cement or water   & compressive strength \\
        \hline
        Naval  & (11934, 15) & features of a naval vessel, characterized by a gas turbine propulsion plant  & lever position \\
        \hline
    \end{tabular}    
    \caption{Summary of real-world datasets used for evaluation.}
    \label{tab:1}
   \end{table} 

We compared the VAW$^2$ algorithm, analyzed in Theorem \ref{th:2}, and the VAW-EWA algorithm, analyzed in Theorem \ref{th:3}, with several other algorithms:  
\begin{itemize}
\item {Raker \cite{Shen2019random}:} this is the algorithm of OGD-EWA type in our notation. It combines OGD the predictions of expert strategies by the EWA-type meta-algorithm. 
\item{OMKL-GF \cite{Ghari2023graph}}: a data-driven kernel selection scheme where a bipartite feedback graph is constructed at every time instant. 
\item{VAW-Aggr:} the predictions of VAW expert strategies are combined by the Vovk VAW-Aggregating algorithm \cite[Section 3.5]{Cesa2006prediction}. The quadratic loss is $\eta$-mixable with $\eta=2$ \cite[Section 3.6]{Cesa2006prediction}. Thus, using the VAW-Aggregating meta-algorithm with $\eta=2$, it is possible to achieve the regret estimate slightly better then for the EWA meta-algorithm \cite[Proposition 3.2]{Cesa2006prediction}.
\item{VAW-ML-Prod, VAW-ML-Poly, VAW-BOA}: the predictions of VAW expert strategies are combined by second-order online algorithms, which use both the cumulative loss (first-order statistic) and the variance of losses (second-order statistic) to adapt their learning rates dynamically \cite{Gaillard2014second,Wintenberger2017optimal}. 
These algorithms are implemented within the Opera library\footnote{\url{https://github.com/Dralliag/opera-python}}, which we employed.
\end{itemize}
We do not consider the VAW algorithm from Theorem \ref{th:1} due to its high computational and space complexities.

The results of experiments are collected in Table \ref{tab:3}.
We do not describe here the parameters of Raker and OMKL-GF algorithms. The results of \cite{Ghari2023graph} were reproduced by running their publicly available code with the parameters they specified. While \cite{Ghari2023graph} averaged results over 20 experiments, we used 5. So, the results presented here are slightly different. Note that theoretically all these algorithms, except VAW$^2$, require knowledge of the interval containing the labels, and should be used with the truncated expert predictions. Since here we consider $y_t\in [0,1]$, instead of $y_t\in [-Y,Y]$, the truncation was performed accordingly:
\[ \overline z_t=\min(1,\max(z_t,0)),\quad z_{t,j}=\langle w_{t,j},\Phi_{\theta_j}(x_t)\rangle. \]
For the VAW$^2$ algorithm we present the results both for original and truncated expert predictions: VAW$^2$(trunc). However, these options give almost the same results. The lowest MSE values are shown in bold. VAW$^2$ algorithm shows the best result across all datasets.
\begin{table}[!ht]
    \centering
    \makebox[\textwidth]{ 
    \begin{tabular}{c|c|c|c|c|c}
        \hline
         & AR(4) & Airfoil & Bias & Concrete & Naval \\ \hline
   Raker & 23.24 & 28.64 & 12.70 & 35.29  & 11.32 \\ \hline
OMKL-GF  & 20.47 & 24.37 & 7.05 & 34.24  & 4.60 \\ \hline
VAW$^2$  &  16.56 & 22.80 & \textbf{4.09} & \textbf{10.96} & \textbf{0.29} \\ \hline
VAW$^2$(trunc) &  16.51 & \textbf{22.78} & \textbf{4.09} & 10.97 & \textbf{0.29} \\ \hline
VAW-Aggr &  16.40 & 26.74 & 5.02 & 13.57 &  0.45 \\ \hline
VAW-EWA &  16.49 & 27.61 & 5.41 & 15.08 & 0.62 \\ \hline
VAW-BOA &  \textbf{16.34} & 26.42 & 4.98 & 13.88 & 0.52 \\ \hline
VAW-ML-Poly &  \textbf{16.34} & 26.10 & 4.96 & 13.33 & 0.37 \\ \hline
VAW-ML-Prod &  \textbf{16.34}  & 26.27 & 4.97 & 13.64 &  0.48 \\ \hline
    \end{tabular}}
    \caption{ MSE (scaled up by $10^3$) of MKL algorithms with 76 kernels. }
    \label{tab:3}
\end{table}

Figure \ref{fig:1} illustrates the MSE of these algorithms over the iterations. We excluded VAW$^2$(trunc), VAW-BOA, AW-ML-Poly to improve the clarity. VAW$^2$ consistently achieves the lowest MSE trajectory across considered real world datasets, indicating strong performance throughout learning. 

\begin{figure}[h!] 
  \centering
  \begin{subfigure}[t]{0.45\textwidth}
    \includegraphics[width=\textwidth]{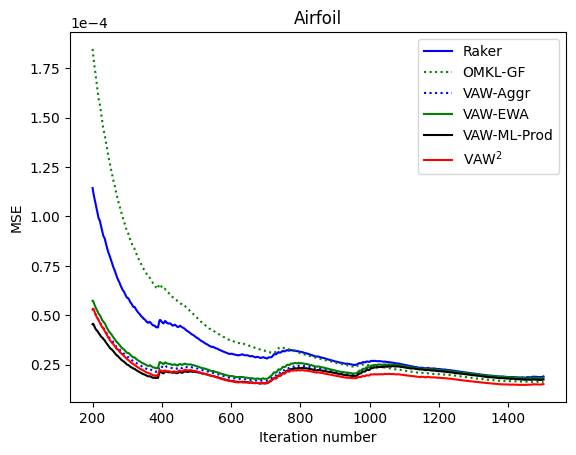}
  \end{subfigure}
  \hfill 
  \begin{subfigure}[t]{0.45\textwidth}
    \includegraphics[width=\textwidth]{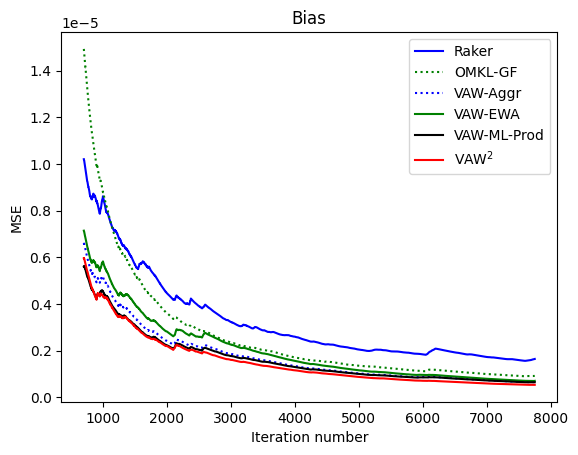}
  \end{subfigure}

  \begin{subfigure}[t]{0.45\textwidth}
    \includegraphics[width=\textwidth]{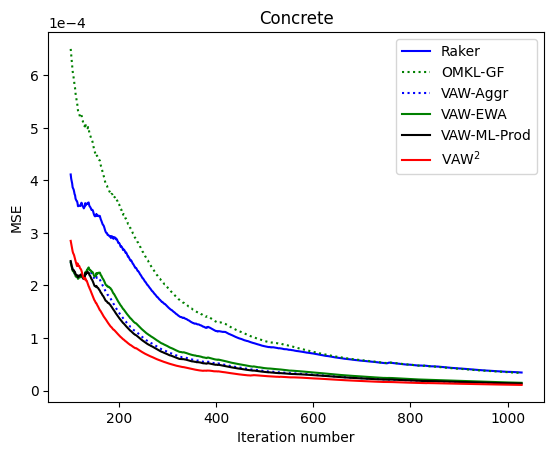}
  \end{subfigure}
  \hfill
  \begin{subfigure}[t]{0.45\textwidth}
    \includegraphics[width=\textwidth]{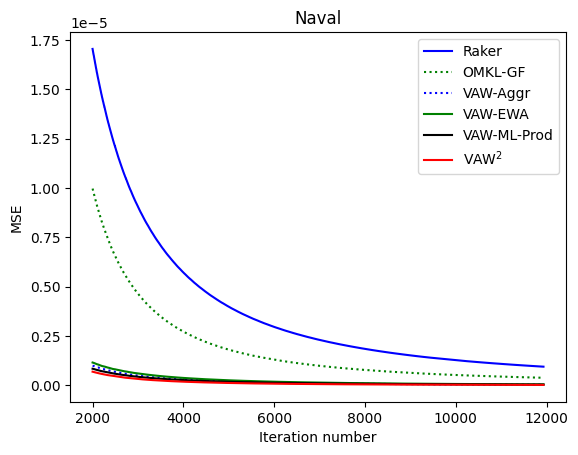}
  \end{subfigure}
  \caption{MSE performance of MKL algorithms.}
  \label{fig:1}
\end{figure}

To further understand the behavior of the suggested algorithms, in Figure \ref{fig:2} we plot the final expert weight vectors $\alpha_T$, assigned by VAW$^2$, VAW-EWA and VAW-ML-Prod algorithms. 
We see that ML-prod exhibits sparsity, concentrating its weighting on a small number of kernels. EWA distributes weight more broadly, while
VAW distinguishes itself by the essential use of negative weights.

\begin{figure}[h!] 
  \centering
  \begin{subfigure}[t]{0.45\textwidth}
    \includegraphics[width=\textwidth]{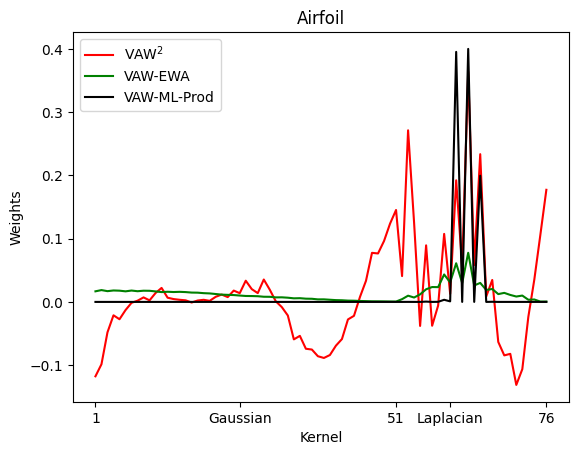}
  \end{subfigure}
  \hfill 
  \begin{subfigure}[t]{0.45\textwidth}
    \includegraphics[width=\textwidth]{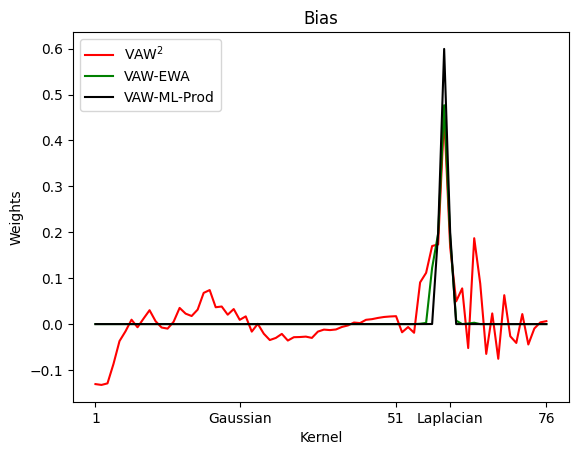}
  \end{subfigure}

  \begin{subfigure}[t]{0.45\textwidth}
    \includegraphics[width=\textwidth]{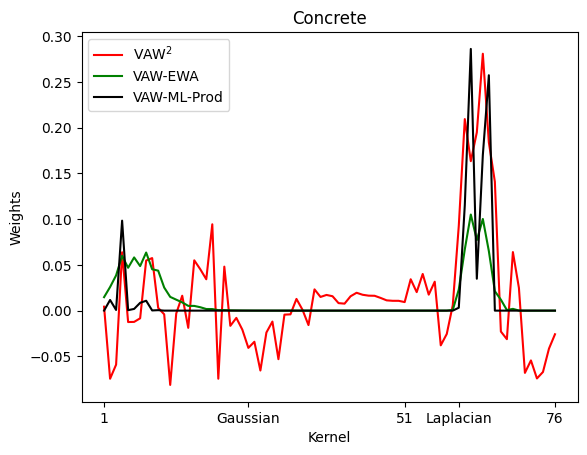}
  \end{subfigure}
  \hfill
  \begin{subfigure}[t]{0.45\textwidth}
    \includegraphics[width=\textwidth]{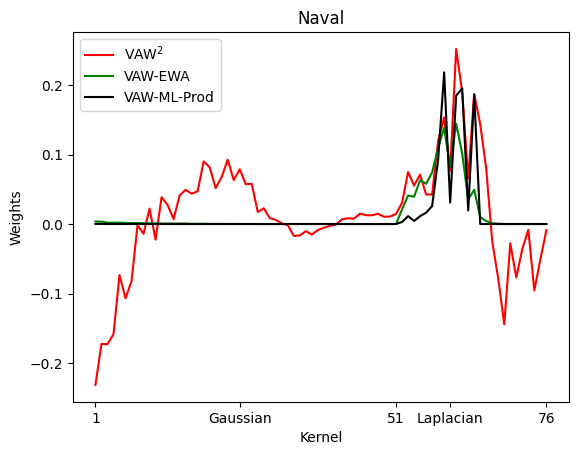}
  \end{subfigure}
  \caption{Final weights of VAW$^2$, VAW-EWA and VAW-ML-Prod algorithms.}
  \label{fig:2}
\end{figure}

\section{Conclusion} \label{sec:5} 
We introduced VAW$^2$, a novel online multi-kernel learning algorithm for least squares regression in RKHS. By leveraging VAW at both the expert level (for kernel-specific predictions) and the meta level (for dynamic kernel combination), VAW$^2$ achieves a balance between computational efficiency and theoretical guarantees.
A key feature of VAW$^2$ is its computational efficiency compared to direct application of the VAW algorithm to concatenated feature vectors, making it scalable for practical applications. We derived a regret bound of $O(T^{1/2}\ln T)$ in expectation with respect to artificial randomness, when the number of random features scales as $T^{1/2}$. The framework accommodates both VAW and EWA meta-algorithms, with truncation strategies further enhancing robustness when label bounds are known. Computational experiments showed encouraging results on some benchmark datasets.

Future work could extend this analysis to derive dynamic regret bounds for non-stationary environments, incorporate mechanisms for online kernel dictionary adaptation, and refine loss bounds under specific data assumptions. It is interesting to perform more extensive benchmarking of the VAW$^2$ algorithm across diverse datasets and application domains.

\printbibliography

\end{document}